\newcommand{\nina}[1]{{#1}}
\newcommand{\myomit}[1]{}
\newcommand{\constraint}[1]{\mbox{\sc #1}}
\newcommand{\regular}{\constraint{Regular}\xspace}
\newcommand{\CFG}{\constraint{Grammar}\xspace}
\newcommand{\CFGSM}{\constraint{Gr}\xspace}
\newcommand{\mymod}{\mbox{\rm mod}}
\newcommand{\NFA}{\mbox{\sc NFA}}
\newcommand{\mycase}{\constraint{Case}\xspace}
\newcommand{\among} {\mbox{\sc Among}\xspace}
\newcommand{\lex} {\mbox{\sc Lex}\xspace}
\newcommand{\CYK} {\mbox{\sc CYK}\xspace}
\newcommand{\andor}{\mbox{{\sc AND/OR}}\xspace}
\newcommand{\andnode}{\mbox{{\sc AND-node}}\xspace}
\newcommand{\ornode}{\mbox{{\sc OR-node}}\xspace}
\newcommand{\andnodes}{\mbox{{\sc AND-nodes}}\xspace}
\newcommand{\ornodes}{\mbox{{\sc OR-nodes}}\xspace}
\newcommand{\gives}{\rightarrow}
\newcommand{\calL}{\ensuremath{\mathcal{L}}}
\newcommand{\calA}{\ensuremath{\mathcal{A}}}
\begin{document}

\newlength{\halftextwidth}
\setlength{\halftextwidth}{0.47\textwidth}
\def\halffigsize{2.2in}
\def\thirdfigsize{1.5in}
\def\negvspace{0in}
\def\posvspace{0em}

\input epsf





\newcommand{\myset}[1]{\ensuremath{\mathcal #1}}

\renewcommand{\theenumii}{\alph{enumii}}
\renewcommand{\theenumiii}{\roman{enumiii}}
\newcommand{\figref}[1]{Figure \ref{#1}}
\newcommand{\tref}[1]{Table \ref{#1}}
\newcommand{\myldots}{\ldots}

\newtheorem{mydefinition}{Definition}
\newtheorem{mytheorem}{Theorem}
\newtheorem{myexample}{Example}
\newtheorem{mytheorem1}{Theorem}
\newcommand{\myproof}{\noindent {\bf Proof:\ \ }}
\newcommand{\myqed}{$\clubsuit$}
\newcommand{\simplify}{\mbox{\ensuremath{\mathit{simplify}}}}
\newcommand{\unfold}{\mbox{\ensuremath{\mathit{unfold}}}}
\newcommand{\myOmit}[1]{}
\newcommand{\tuple}[1]{\mbox{\ensuremath{\left\langle #1 \right\rangle}}}
\newcommand{\set}[1]{\mbox{\ensuremath{\left\{ #1 \right\}}}}

\title{Reformulating Global Grammar Constraints\thanks{NICTA is funded by 
the Australian Government's Department of Broadband, 
Communications,  and the Digital Economy and the 
Australian Research Council. 
}}

\author{George Katsirelos\inst{1} \and
Nina Narodytska\inst{2}
\and
Toby Walsh\inst{2}}
\institute{NICTA, Sydney, Australia,
email: george.katsirelos@nicta.com.au
\and NICTA and University of NSW,
Sydney, Australia, email: ninan@cse.unsw.edu.au, toby.walsh@nicta.com.au}

\date{10th January 2009}

\maketitle
\begin{abstract}
An attractive mechanism to specify 
global constraints in rostering
and other domains is via formal languages. For instance, 
the \regular and \CFG constraints specify
constraints in terms of the 
languages
accepted by an automaton
and a context-free grammar
respectively. 
%
Taking advantage of the fixed length
of the constraint, we give an algorithm to transform 
a context-free grammar into an automaton. 
We then 
study the use of minimization techniques
to reduce the size of such automata and speed up propagation. 
We show that minimizing such automata after
they have been unfolded and domains initially
reduced can give automata that are more compact than
minimizing before unfolding and reducing. 
Experimental results show that such transformations
can improve the size of rostering
problems that we can 
``model and run''.
\end{abstract}

\section{Introduction}

Constraint programming provides a wide range of tools for modelling
and efficiently solving real world problems. 
However, modelling
remains a challenge even for experts. 
%
%
\myOmit{A CP solver can display dramatically different 
performance on different models of the same problem. 
The modeller therefore needs considerable expertise
in specifying their problem in an appropriate model. }
Some recent attempts to simplify the modelling process have focused on
specifying constraints using formal language theory. 
For example the \regular~\cite{pesant1}
and \CFG constraints~\cite{grammar2,qwcp06}
permit constraints to be expressed
in terms of automata and grammars. 
\myOmit{A \CFG constraint can be
exponentially more succinct than the corresponding \regular constraint, but
reasoning with it is significantly more expensive. It is 
therefore not clear which is better. 
}
In this paper, we make two contributions. First, we investigate the
relationship between \regular and \CFG .
In particular, we show that it is often beneficial to reformulate
a \CFG constraint as a
\regular constraint. Second, we explore the effect of
minimizing the automaton specifying a \regular
constraint. We prove that by minimizing this automaton \emph{after}
unfolding and initial constraint propagation, we can get an
exponentially smaller and thus more efficient representation.
We show that these transformations  \myOmit{on CP and
pseudo-Boolean (PB) encodings of a challenging rostering problem,}
can improve runtimes by over an order of magnitude.

\section{Background}

A constraint satisfaction problem consists of a set of variables,
each with a domain of values, and a set of constraints
specifying allowed combinations of values for given subsets of
variables. 
A solution is an assignment 
to the variables satisfying the constraints.
A constraint is \emph{domain consistent} 
iff for each variable, every value in its domain 
can be extended to an assignment that satisfies
the constraint. 
%
We will consider
constraints specified by automata
and grammars. 
An automaton 
$A=\tuple{\Sigma, Q, q_0, F, \delta}$
consists of 
an alphabet $\Sigma$, 
a set of 
states $Q$, an initial state $q_0$, a set of 
accepting states $F$, and a transition
relation $\delta$ defining the possible next
states given a starting state
and symbol. 
The automaton is deterministic (DFA) is there is only
one possible next state, non-deterministic (NFA) otherwise.
%
%
A string $s$ is \emph{recognized} by $\calA$ iff 
starting from the state $q_0$ we 
can 
reach one of the accepting states using
the transition relation $\delta$.
%
%
Both DFAs and NFAs recognize precisely regular
languages. 
%
The constraint $\regular(\mbox{\calA},[X_1,\ldots,X_n])$
is satisfied iff $X_1$ to $X_n$ is a
string accepted by 
\mbox{\calA} \cite{pesant1}. 
Pesant has given a domain consistency propagator
for \regular based on unfolding the
DFA to give a $n$-layer automaton which
only accepts strings of length $n$ \cite{pesant1}. 

Given an automaton ${\calA}$, 
we write $\unfold_n({\calA})$
for the unfolded and layered form of ${\calA}$ that
just accepts words of length $n$ which are in the
regular language, 
$\min({\calA})$ for the
canonical form of ${\calA}$ with minimal number
of states,
$\simplify({\calA})$ for the simplified
form of ${\calA}$ constructed by deleting 
transitions and states that are no longer reachable
after domains have been reduced.
We write $f_{\calA}(n) \ll g_{\calA}(n)$ iff 
$f_{\calA}(n) \leq g_{\calA}(n)$ for all $n$,
and there exist ${\calA}$ such
that $\log \frac{g_{\calA}(n)}{f_{\calA}(n)} = \Omega(n)$.
That is, 
$g_{\calA}(n)$ is never smaller than
$f_{\calA}(n)$ and there are cases where
it is exponentially larger. 

A context-free grammar is a tuple $G = \langle T, H, P, S
\rangle$, where $T$ is a set of \emph{terminal} symbols called
the \emph{alphabet} of $G$, $H$ is a set of  \emph{non-terminal}
symbols, $P$ is a set of
productions and
$S$ is a unique starting symbol. A production is a rule $A \gives \alpha$
where $A$ is a non-terminal and $\alpha$ is a sequence of terminals and
non-terminals. A string in $\Sigma^*$ is \emph{generated} 
by $G$ if we start with the sequence $\alpha = \langle S
\rangle$ and non deterministically generate $\alpha'$ by replacing any
non-terminal $A$ in $\alpha$ by the right hand side of any production $A
\gives \alpha$ until $\alpha'$ contains only terminals.
A context free language $\mathcal{L}(G)$ 
is the language of strings 
generated by the context free grammar $G$.
A context free grammar is in Chomsky normal form if all productions
are of the form $A \gives BC$ where $B$ and $C$ are non terminals or
$A \gives a$ where $a$ is a terminal.  Any context free grammar can be
converted to one that is in Chomsky normal form with at most a linear
increase in its size. 
%
%
%
A grammar $G_a$ is \emph{acyclic} iff there exists a partial order
$\prec$ of the non-terminals, such that for every production $A_1
\rightarrow A_2 A_3$, $A_1 \prec A_2$ and $A_1 \prec A_3$.
The constraint $\CFG([X_1, \ldots, X_n],G)$
is satisfied iff $X_1$ to $X_n$ is
a string accepted by $G$~\cite{grammar2,qwcp06}.

\begin{example}
\label {e:into}
As the running example we use the $\CFG([X_1, X_2, X_3],G)$ constraint
with domains $D(X_1) =\{a\}$, $D(X_2) =\{a,b\}$,
$D(X_3) =\{b\}$ and the grammar $G$ in Chomsky
normal form~\cite{qwcp06}
$\{ S \rightarrow AB, A\rightarrow AA \mid a, B \rightarrow BB \mid b \}$.
\end{example}

%
Since we only accept strings of a fixed length, we can
convert any context free grammar to a regular grammar. However, this
may increase the size of the
grammar exponentially. Similarly, any NFA can be
converted to a DFA, but this may
increase the size of the automaton exponentially.

\section{\CFG constraint}
\label{sec:grammar}

We briefly describe the 
domain consistency
propagator for the $\CFG$ constraint proposed
in~\cite{grammar2,qwcp06}. This propagator is based on the \CYK parser
for context-free grammars.
It constructs a dynamic
programing table $V$ where an element $A$ of $V[i,j]$ is a
non-terminal that generates a substring from the domains of variables
$X_{i},\ldots,X_{i+j-1}$ that can be extended to a solution of the
constraint using the domains of the other variables. 
The 
table $V$ produced by the propagator 
for Example~\ref{e:into} is given in Figure~\ref{f:f1}.

\begin{figure} \centering
  \caption{\label{f:f1} Dynamic programming table produced by the
    propagator of the \CFG constraint. Pointers correspond to possible
    derivations.}
  \includegraphics[width=0.4\textwidth,height=3cm]{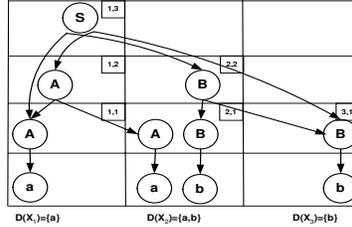}
\end{figure}

An alternative view of the dynamic programming table produced by this
propagator is as an \andor graph~\cite{qwcp07}. This is a layered DAG,
with layers alternating between \andnodes or \ornodes. Each \ornode in
the \andor graph corresponds to an entry $A \in V[i,j]$. An \ornode
has a child \andnode for each production $A \gives BC$ so that $A \in
V[i,j]$, $B \in V[i,k]$ and $C \in V[i+k,j-k]$. The children of this
\andnode are the \ornodes that correspond to the entries $B \in
V[i,k]$ and $C \in V[i+k,j-k]$. Note that the \andor graph constructed
in this manner is equivalent to the table $V$~\cite{qwcp07}, so we use
them interchangeably in this paper.

\begin{figure} \centering
  \caption{\label{f:andor-graph} \andor graph.}
  \includegraphics[width=0.4\textwidth,height=3cm]{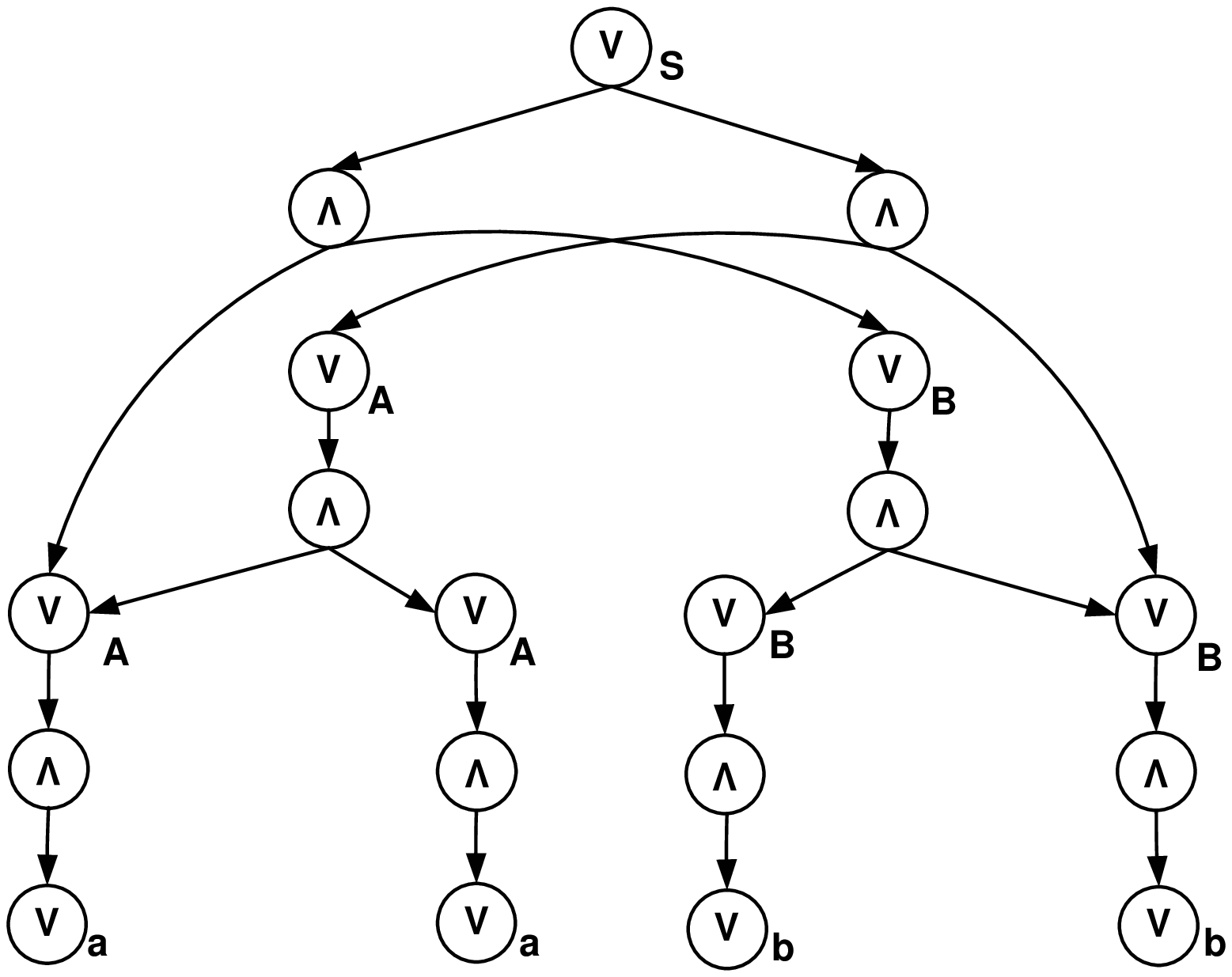}
\end{figure}

Every derivation of a string $s\in \calL(G)$ can be represented as a tree
that is a subgraph of the \andor graph and therefore can be
represented as a trace in $V$. Since every possible
derivation can be represented this way, both the table $V$
and the corresponding \andor graph are a compilation of all solutions
of the \CFG constraint.

\section {Reformulation into an automaton}
\label{sec:ref-cfg-nfa}

The time complexity
of propagating a \CFG constraint is $O(n^3|G|)$, as opposed
to $O(n|\delta|)$ for a \regular constraint. \myomit{In addition, the
monolithic propagator for the \CFG constraint is not incremental,
which means that the $O(n^3|G|)$ cost is paid at every invocation of
the propagator, while the propagator for \regular is incremental, so
$O(n|\delta|)$ is the cumulative time complexity of all invocations of
this propagator down an entire branch of the search tree.}
Therefore, reformulating
a \CFG constraint as a \regular constraint may improve propagation
speed if it does not require a large transition relation. 
In addition, we can perform optimizations such as
minimizing the automaton. 
In this section, we argue that reformulation is practical
in many cases (sections~\ref{sec:reform-cfg-acfg}-%
\ref{sec:reform-pda-nfa}), and there is a polynomial test to
determine the size of the resulting NFA 
(section~\ref{sec:computing-size-nfa}). 
In the worst case, the resulting NFA is exponentially
larger then the original \CFG constraint as the following example
shows. Therefore, performing the transformation itself is not a
suitable test of the feasibility of the approach. 


\begin{example}
\label{e:reg-cfg-sep}
Consider $\CFG([X_1,\ldots,X_n],G)$ where $G$
generates $L = \{ww^R| w \in \{0, 1\}^{n/2}\}$. 
Solutions of \CFG can be compiled into the dynamic programming table
of size $O(n^3)$, while an equivalent NFA that accepts the same 
language 
has exponential size.
Note that an exponential separation does not immediately
follow from that between regular and context-free grammars,
because solutions of the $\CFG$ constraint are the 
strict subset of $\calL(G)$ which have length $n$. 
\end{example}

In the rest of this section we describe the reformulation in three
steps.  First, we convert 
into an acyclic grammar
(section~\ref{sec:reform-cfg-acfg}), then 
into a pushdown automaton (section~\ref{s:reform-acfg-pda}), and finally
we encode 
this as a NFA
(section~\ref{sec:reform-pda-nfa}). \nina{The first two steps are well known
in formal language theory but we briefly describe them for clarity.} 
 
\subsection {Transformation 
into an acyclic grammar}
\label{sec:reform-cfg-acfg}

We first construct an acyclic grammar,
$G_a$ 
such that the language $\calL(G_a)$ coincides with solutions
of the \CFG constraint. 
Given the table $V$ produced by the $\CFG$ 
propagator (section~\ref{sec:grammar}), we construct
an acyclic grammar in the following way.  For each
possible derivation of a nonterminal $A$, $A \rightarrow BC$, such
that $A \in V[i,j]$, $B \in V[i,k]$ and $C \in V[i+k,j-k]$ we
introduce a production $A_{i,j} \rightarrow B_{i,k}C_{i+k,j-k}$ in
$G_a$
(lines~\ref{a:cfg-to-acfg:add_prods:start}-~\ref{a:cfg-to-acfg:add_prods:end}
of algorithm~\ref{a:cfg-to-acfg}). The
start symbol of $G_a$ is $S_{1,n}$.  By construction, the obtained
grammar $G_a$ is acyclic. Every production in $G_a$ is of the form
$A_{i,j} \rightarrow B_{i,k}C_{i+k,j-k}$ and nonterminals $B_{i,k}$,
$C_{i+k,j-k}$ occur in rows below $j$th row in $V$.
Example~\ref{e:acfg} shows the grammar $G_a$ obtained by
Algorithm~\ref{a:cfg-to-acfg} on our running example.

\begin{example}
  \label {e:acfg} The acyclic grammar $G_a$ 
constructed from our running example.
  \begin{align*}
    S_{1,3} \rightarrow A_{1,2}B_{3,1} \mid A_{1,1}B_{2,2}&\ & A_{1,2}\rightarrow A_{1,1}A_{2,1} &\ & B_{2,2} \rightarrow B_{2,1}B_{3,1} \\
    A_{i,1} \rightarrow a_i &\ &  B_{i,1}\rightarrow b_{i} &\ &  \forall i \in \{1,2,3\}
  \end{align*}
\end{example}

To prove  equivalence, we recall that traces of the table $V$
represent all possible derivations of $\CFG$ solutions. Therefore,
every derivation of a solution can be simulated by productions from
$G_{A}$.  For instance, consider the solution $(a,a,b)$ of \CFG from
Example~\ref{e:into}. A possible derivation of this string is
$S|_{S\in V[1,3]} \rightarrow AB|_{A \in V[1,2],B \in V[3,1]}
\rightarrow AAB|_{A \in V[1,1],A \in V[2,1],B \in V[3,1]} \rightarrow
aAB|_{\ldots} \rightarrow aaB|_{\ldots} \rightarrow aab|_{\ldots}$.
We can simulate this derivation using productions in $G_a$: $S_{1,3}
\rightarrow A_{1,2}B_{3,1} \rightarrow A_{1,1}A_{2,1}B_{3,1}
\rightarrow a_{1}A_{2,1}B_{3,1} \rightarrow a_{1}a_{2}B_{3,1}
\rightarrow a_{1}a_{2}b_{3}$.

\begin{algorithm}
\scriptsize {
\caption{Transformation 
to an Acyclic Grammar}\label{a:cfg-to-acfg}
\begin{algorithmic}[1]
\Procedure{ConstructAcyclicGrammar}{$in: X,G,V; out: G_a$}

\State $T = \emptyset$ \Comment{$T$ is the set of terminals in $G_a$} 
\State $H = \emptyset$ \Comment{$H$ is the set of nonterminals in $G_a$} 
\State $P = \emptyset$ \Comment{$P$ is the set of productions in $G_a$} 
\For{$i = 1$ \textbf{to} $n$ }	 \label{a:s_up}
	\State $V[i
	,1] = \{A |A \rightarrow a \in G, a \in D(X_i)\}$
	\For{$ A \in V[i,1]$ s.t $ A \rightarrow a \in G, a \in D(X_i)$} \label{a:first_row_cond} 	
		\State $T = T \cup \{a_{i}\}$
		\State $H = H \cup \{A_{i,1}\}$
		\State $P = P \cup \{A_{i,1} \rightarrow a_{i}\}$
	\EndFor			
\EndFor

\For{$j = 2$ \textbf{to} $n$ } \label{a:cfg-to-acfg:add_prods:start}
	\For{$i =1$ \textbf{to} $n -j + 1$}
		\For{each $A \in V[i,j]$ } 
			\For{$k = 1$ \textbf{to} $j - 1$}						
						\For{each $A \rightarrow BC \in G$ s.t. $B \in V[i,k], C \in V[i+k,j-k]$}		
						\State $H = H \cup \{A_{i,j},B_{i,k},C_{i+k,j-k}\}$
						\State $P = P \cup \{A_{i,j} \rightarrow B_{i,k}C_{i+k,j-k}\}$
						\EndFor
			\EndFor
		\EndFor
	\EndFor
\EndFor\label{a:cfg-to-acfg:add_prods:end}

\EndProcedure
\end{algorithmic}
}
\end{algorithm}

Observe that, the acyclic grammar $G_a$ is  essentially a labelling of
the \andor graph, with non-terminals corresponding to \ornodes and
productions corresponding to \andnodes. Thus, we use the notation
$G_a$ to refer to both the \andor graph and the corresponding acyclic
grammar.

\subsection {Transformation into a pushdown automaton}
\label{s:reform-acfg-pda}

Given an acyclic grammar $G_a=(T,H,P,S_{1,n})$ 
from the previous section, we now
construct a pushdown automaton $P_a (\left\langle S_{1,n} \right\rangle, 
T, T
\cup H, \delta, Q_P, F_P)$, where $\left\langle S_{1,n} \right\rangle$ 
is the initial stack
of $P_a$, $T$ is the alphabet, $T \cup H$ is the set of stack symbols,
$\delta$ is the transition function, $Q_P = F_P = \{q_P\}$ is the {single}
initial and  accepting state. We use an 
algorithm that encodes a context free grammar into a pushdown
automaton (PDA) that computes
the leftmost derivation of a string\cite{book}. 
%
The stack maintains the sequence of
symbols that are expanded in this derivation. At every step,
the PDA non-deterministically uses a production 
to expand the top symbol of the stack if it is a non-terminal,
or consumes a symbol of the input string if it matches the terminal at
the top of the stack.

We now describe this reformulation in detail. 
There exists a single state $q_P$ which is 
both the starting 
and an accepting state.
For each non-terminal $A_{i,j}$ in $G_a$ we introduce the set of
transitions $\delta(q_P, \varepsilon, A_{i,j}) = \{ (q_P, \beta)|
\forall A_{i,j} \rightarrow \beta \in G_a\}$.  For each terminal $a_i
\in G_a$, we introduce a transition $\delta(q_P, a_i, a_i) = \{(q_P,
\varepsilon)\}$.  The automaton $P_a$ accepts on the empty stack. 
This constructs a pushdown automaton
accepting $\calL(G_a)$.

\begin{example}
\label {e:apda} The pushdown automaton $P_a$ 
constructed for the running example.
  \begin{align*}
    \delta(q_P, \varepsilon,  S_{1,3})= \delta ( q_P,  A_{1,2}B_{3,1}) 
    &\ &
    \delta(q_P, \varepsilon,  S_{1,3} ) = \delta ( q_P,  A_{1,1}B_{2,2}) \\     
    \delta(q_P, \varepsilon,  A_{1,2} ) = \delta ( q_P,  A_{1,1}A_{2,1}) &\ & 
    \delta(q_P, \varepsilon,  B_{2,2} ) = \delta ( q_P,  B_{2,1}B_{3,1})\\
    \delta(q_P, \varepsilon,  A_{i,1} ) = \delta ( q_P,  a_i) &\ &  \delta(q_P, \varepsilon,  B_{i,1} ) = \delta ( q_P,  b_i) \forall i \in \{1,2,3\}\\
    \delta(q_P, a_i,  a_i) = \delta ( q_P,  \varepsilon) &\ &  \delta(q_P, b_i,  b_{i} ) = \delta ( q_P,  \varepsilon) \forall i \in \{1,2,3\}
  \end{align*}
\end{example}

\subsection {Transformation into a NFA}
\label{sec:reform-pda-nfa}

Finally, we construct an $\NFA(\Sigma, Q, Q_0 ,F_0,\sigma)$, denoted $N_a$,
using the PDA from the last section.   States of this
NFA encode all possible configurations of the stack of the PDA
that can appear in parsing a string from $G_a$. To
reflect that a state of the NFA represents a stack, we write states as
sequences of symbols $\left\langle \alpha \right\rangle$, where
$\alpha$ is a possibly empty sequence of symbols and 
$\alpha[0]$ is the top of the
stack. For example, the initial state 
is $\left\langle S_{1,n} \right\rangle$ corresponding to the initial
stack $\left\langle S_{1,n} \right\rangle$ of $P_a$.  
Algorithm~\ref{a:pda-to-nfa}
unfolds the PDA in a similar way to 
unfolding the DFA. \nina{Note that the NFA accepts
only strings of length $n$ and has the initial state $Q_0 =\left\langle S_{1,n} \right\rangle$
and the single final state $F_0 = \left\langle \right\rangle$.}

\begin{algorithm}
\scriptsize {
\caption{Transformation to NFA}\label{a:pda-to-nfa}
\begin{algorithmic}[1]
\Procedure{PDA to NFA}{$in: P_a, out: N_a$}
   \State $Q_{u} = \{\left\langle S_{1,n} \right\rangle\}$ \Comment{$Q_u$ is the set of unprocessed states}
   \State $Q = \emptyset$ \Comment{$Q$ is the set of states in $N_a$}
   \State $\sigma = \emptyset$ \Comment{$\sigma$ is the set of transitions in $N_a$}
   \State $Q_0 = \left\{ \left\langle S_{1,n}\right\rangle \right\}$ \Comment{$Q_0$ is the initial state in $N_a$}
   \State $F_0 = \left\{ \left\langle \right\rangle \right\}$ \Comment{$F_0$ is the set of final states in $N_a$}
   \While{$Q_u$ is not empty}
      \If{$q \equiv \left\langle A_{i,j}, \alpha \right\rangle$}
          \For{each transition $\delta (q_P, \varepsilon , A_{i,j}) = (q_P, \beta) \in \delta$}
              \State $\sigma= \sigma\cup \{\sigma(\left\langle A_{i,j}, \alpha \right\rangle, \varepsilon) = \left\langle \beta, \alpha \right\rangle\}$
              \If {$\left\langle \beta, \alpha \right\rangle \notin Q$}
              	\State $Q_u = Q_u \cup \{\left\langle \beta, \alpha \right\rangle\}$
              \EndIf
          \EndFor
           \State $Q = Q \cup \{\left\langle A_{i,j}, \alpha \right\rangle\}$

      \ElsIf{$q \equiv \left\langle a_i, \alpha \right\rangle$}
           \For{each transition $\delta(q_P, a_i, a_i) = (q_P, \varepsilon)  \in \delta$}
                \State $\sigma= \sigma\cup \{\sigma(\left\langle a_{i}, \alpha \right\rangle, a_{i}) = \left\langle \alpha \right\rangle\}$
              	\If {$\left\langle \alpha \right\rangle \notin Q$}
                	\State $Q_u = Q_u \cup \{\left\langle \alpha \right\rangle\}$
              	\EndIf
           \EndFor
          \State $Q = Q \cup \{\left\langle  a_i, \alpha \right\rangle\}$
       \EndIf
   \State $Q_u = Q_u \setminus \{q\}$
   \EndWhile 
   \State $N_a(\Sigma,Q, Q_0, F_0, \sigma) = \varepsilon-Closure(N_a (\Sigma,Q,Q_0, F_0\sigma))$. \label{a:cfg2nfa_eps_closure} 
\EndProcedure
\end{algorithmic}
}
\end{algorithm}

We start from the initial stack $\tuple{S_{1,n}}$
and find all distinct stack configurations that are reachable from
this stack using transitions from $P_a$. For each reachable 
stack configuration we create a state in the NFA 
and add the corresponding transitions.
If the new stack configurations are the result of expansion
of a production in the original grammar, these transitions 
are $\varepsilon-$transitions, otherwise they consume a symbol
from the input string.
Note that if a non-terminal appears on
top of the stack and gets replaced, then it cannot appear in 
any future stack configuration
due to the acyclicity of $G_a$. Therefore $|\alpha|$ is bounded
by $O(n)$ and Algorithm~\ref{a:pda-to-nfa} terminates. The size of
$N_a$ is $O(|G_a|^n)$ in the worst case.  The automaton $N_a$ that we
obtain before line~\ref{a:cfg2nfa_eps_closure} is an acyclic
NFA with $\varepsilon$ transitions. It accepts
the same language as the PDA $P_a$ since every path between the
starting and the final state of $N_A$ is a trace of the stack
configurations of
$P_a$. Figure~\ref{f:f2}(a) shows the automaton $N_a$ with
$\varepsilon$-transitions constructed from the running example.
%
%
%
After applying the $\varepsilon$-closure operation, we obtain a
layered NFA that does not have $\varepsilon$
transitions (line~\ref{a:cfg2nfa_eps_closure}) (Figure~\ref{f:f2}(b)).

\begin{figure} \centering
    \caption{\label{f:f2}
   $N_a$ produced by Algorithm~\ref{a:pda-to-nfa}}
    \includegraphics[width=1\textwidth]{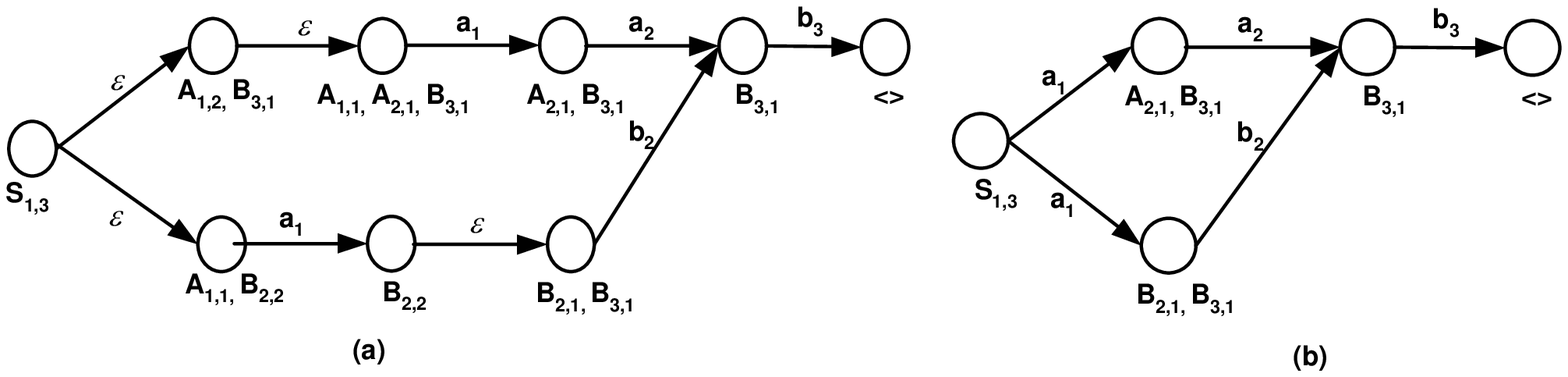}
\end{figure}

\subsection{Computing the size of the NFA} 
\label{sec:computing-size-nfa}

As the NFA may be exponential in size, we provide
a polynomial method of computing its size in advance. 
We can use this to decide if it is practical to transform it 
in this way. 
Observe first that the transformation of a PDA to an NFA maintains
a queue of states that correspond to stack configurations. Each state
corresponds to an \ornode in the \andor graph and each state of an
\ornode $v$ is generated from the states of the parent \ornodes of
$v$. This suggests a relationship between paths in the \andor graph 
of the \CYK algorithm and
states in $N_a$. 
We use  this relationship
to compute a loose upper bound for the number of states in $N_a$
in time linear in the size of the \andor graph by counting the number
of paths in that graph.
Alternatively, we compute
the exact number of states in $N_a$
in time quadratic in the size of the \andor graph. 

\begin{theorem}
  \label{thm:path-to-stack}
  There exists a surjection between paths in $G_a$ from the root to
  \ornodes and stack configurations in the PDA $P_a$.
\end{theorem}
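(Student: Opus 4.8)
The plan is to exhibit an explicit map from root-to-\ornode paths in $G_a$ onto stack configurations of $P_a$, and then argue it is onto. I would first set up notation: a path in $G_a$ from the root $S_{1,n}$ to an \ornode $v$ alternates \ornodes and \andnodes; writing the path as a sequence of \ornodes $S_{1,n} = u_0, u_1, \ldots, u_m = v$ together with the \andnodes (productions) $A_{k} \gives B_k C_k$ used to pass from $u_{k-1}$ to $u_k$, each step chooses one of the two children of the selected \andnode to continue along. The map I would define sends such a path to the stack that the leftmost-derivation PDA $P_a$ holds at the moment it has just exposed $v$ on top: namely, reading the path from the leaf end, the stack is $\langle v, w_m, w_{m-1}, \ldots, w_1\rangle$ where $w_k$ is the sibling of $u_k$ under the \andnode taken at step $k$ \emph{if} we descended into the left child (so the right sibling is still pending below), and $w_k$ is empty if we descended into the right child (its left sibling was already fully consumed, matching leftmost derivation). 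This is exactly the invariant used in the standard grammar-to-PDA construction restated in Section~\ref{s:reform-acfg-pda}, specialized to $G_a$.

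The key steps, in order, are: (1) define the map $\phi$ from paths to stacks as above; (2) verify $\phi$ is well defined, i.e.\ that for a path ending at $v$ the described sequence really is a legal stack configuration of $P_a$ — this follows by induction on $m$, mirroring each path-extension step by the corresponding PDA move ($\varepsilon$-expansion of an \ornode, and pop moves that consume the terminal layers of already-completed subtrees when we switch from a left child to its right sibling); (3) prove surjectivity: given any reachable stack configuration $\langle \gamma_0, \gamma_1, \ldots, \gamma_t\rangle$ of $P_a$, reconstruct a path that maps to it. For (3) I would run the PDA derivation that produced the configuration and read off, for each non-terminal on the stack, which production expanded the symbol immediately above it and which child was taken; acyclicity of $G_a$ guarantees this derivation is finite and that $|\gamma| = O(n)$, and the \andor-graph structure guarantees the reconstructed \ornode/\andnode alternation is an actual path from the root. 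Since $P_a$ accepts on the empty stack and every stack configuration occurring in \emph{some} accepting run corresponds to a partially completed leftmost derivation, every such configuration is hit.

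The main obstacle I expect is making the surjection — rather than a bijection — precise: distinct paths can yield the same stack because a stack records only the \emph{pending} right-siblings, not the already-consumed left subtrees, so information about how those left subtrees were derived is lost. Concretely, two paths that agree on the ``rightmost spine'' of pending obligations but differ inside a completed left subtree collapse to the same configuration; thus I must be careful to claim only surjectivity, and to note where injectivity fails (this is also what makes path-counting an \emph{upper} bound on the number of NFA states, as promised in Section~\ref{sec:computing-size-nfa}). A secondary, more routine point is handling the base of the induction (the single-symbol stacks at the terminal layer $V[i,1]$) and confirming that the $\varepsilon$-closure step of Algorithm~\ref{a:pda-to-nfa} does not change the set of stack configurations, only removes the $\varepsilon$-moves between them.
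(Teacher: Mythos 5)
Your proposal is correct and follows essentially the same route as the paper: the same map that pushes the pending right sibling whenever the path descends into a left child (with the target \ornode on top), and surjectivity argued via the correspondence between reachable stack configurations and partial leftmost derivations, reconstructing a root-to-\ornode path whose image is the given stack. Your closing remark on why the map is only a surjection (completed left subtrees are forgotten) matches the paper's own observation following the theorem.
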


\begin{proof}
Consider a path $p$ from the root of the \andor graph to an \ornode
labelled with $A_{i,j}$. We construct 
a stack configuration $\Gamma(p)$ 
that corresponds to $p$. 
We start with the empty stack $\Gamma = \left\langle
\right\rangle$. We traverse the path from the root to $A_{i,j}$. For
every \andnode $v_1 \in p$, with left child $v_l$ and right child
$v_r$, if the successor of $v_1$ in $p$ is $v_l$, then we push $v_r$
on $\Gamma$, otherwise do nothing. When we reach $A_{i,j}$, we push it on
$\Gamma$. The final configuration $\Gamma$ is unique for $p$ and
corresponds to the stack of the PDA after having parsed the substring
$1\ldots i-1$ and having non-deterministically chosen to parse the
substring $i \ldots i+j-1$ using a production with $A_{i,j}$ on the
LHS. 

\sloppy
We now show that all stack configurations can be generated by the
procedure above. \nina{
Every stack configuration corresponds to at least one partial left most derivation of a string. We say a stack configuration
$\left\langle \alpha \right\rangle$ corresponds to a derivation $dv = \left\langle a_1,\ldots,a_{k-1},A_{k,j}, \alpha \right\rangle$  
if  $\alpha$ is the context of the stack after parsing the prefix of the string of length $k+j$. Therefore, it is enough to show that all partial left most derivation (we omit the prefix of terminals) 
can be generated by the procedure above. We prove by a contradiction.
Suppose that $\left\langle a_1,\ldots,a_{i-1}, B_{i,j}, \beta \right\rangle$ is the partial left most
derivation  such that $\Gamma (p(root, B_{i,j})) \neq \beta$,  where
$p (root, B_{i,j})$  is the path from the root to the \ornode $B_{i,j}$ and 
for any partial derivation $\left\langle a_1,\ldots,a_{k-1}, A_{k,j}, \alpha \right\rangle$, 
such that $k < i$ $A_{k,j} \in G_a$  $\Gamma (p(root, A_{k,j})) = \alpha$.
%
Consider the production rule that introduces the nonterminal $B_{i,j}$ to the partial derivation.
If the production rule is $D \rightarrow  C, B_{i,j}$, then the partial derivation is
$\left\langle a_1,\ldots,a_{f}, D, \beta \right\rangle \Rightarrow \mid_{ D \rightarrow  C, B_{i,j}}  \left\langle a_1,\ldots,a_{f}, C, B_{i,j}, \beta \right\rangle$. 
The path from the root to the node $B_{i,j}$ is a concatenation of the paths from $D$ to $B_{i,j}$ and from the root
to $D$. Therefore, $\Gamma(p(root, B_{i,j}))$  is constructed as a concatenation of $\Gamma(p(D, B_{i,j}))$ and $\Gamma(p(root,D))$. $\Gamma(p(D, B_{i,j}))$ is empty because the node $B_{i,j}$ is  the right child of 
$\andnode$ that corresponds to the production $D \rightarrow  C, B_{i,j}$ and  $\Gamma(p(root, D)) = \beta$ because $f < i$. 
Therefore, $\Gamma(p(root, B_{i,j}))= \beta$.   
If the production rule is $D \rightarrow  B_{i,j}, C$, then the partial derivation is
$ \left\langle a_1,\ldots,a_{i-1}, D, \gamma \right\rangle \Rightarrow \mid_{ D \rightarrow  B_{i,j}, C}  \left\langle a_1,\ldots,a_{i-1}, B_{i,j}, C, \gamma\right\rangle =  \left\langle a_1,\ldots,a_{i-1}, B_{i,j}, \beta \right\rangle $.
Then, $\Gamma(p(root, D)) = \gamma$, because $i- 1 < i$ and $ \Gamma(p(D, B_{i,j})) = \left\langle C \right\rangle$, because the node $B_{i,j}$ is  the left child of $\andnode$ that corresponds to the production $D \rightarrow  C, B_{i,j}$.
Therefore, $\Gamma(p (root, B_{i,j})) =  \left\langle C, \gamma \right\rangle =  \beta$. This leads to a contradiction.}
 
\myomit{
Recall that each state in $Q_u$ of
algorithm~\ref{a:pda-to-nfa} corresponds to an \ornode $A_{i,j}$ of
the \andor graph and a label $\alpha$. For each such state, there
exists a stack configuration $\Gamma=\left\langle A_{i,j} \; \alpha
\right\rangle$. Moreover, each label is generated from the labels of
the parent \ornodes of $A_{i,j}$. Therefore, each label can be mapped
to at least one path to $A_{i,j}$ and each stack configuration
$\Gamma$ can also be mapped to at least one path in $G_a$.}
\qed
\end{proof}

\begin{example}
  An example of the mapping described in the last proof
  is in Figure~\ref{f:f3}(a) for the
  grammar of our running example. Consider the \ornode
  $A_{1,1}$. There are 2 paths from $S_{1,3}$ to $A_{1,1}$. One is
  direct and uses only \ornodes $\tuple{S_{1,3}, A_{1,1}}$ and the
  other uses \ornodes $\tuple{S_{1,3}, A_{1,2}, A_{1,1}}$. The
  2 paths are mapped to 2 different stack configurations
  $\tuple{A_{1,1},B_{2,2}}$ and $\tuple{A_{1,1}, A_{2,1}, B_{3,1}}$
  respectively. We highlight edges that are incident to \andnodes on
  each path and lead to the right children of these \andnodes. There
  is exactly one such edge for each element of a stack
  configuration. \qed
\end{example}

Note that theorem~\ref{thm:path-to-stack} only specifies a surjection
from paths to stack configurations, not a bijection. Indeed, different
paths may produce the same configuration $\Gamma$. 

\begin{example}
  Consider the grammar $G= \{ S\gives AA, A \gives a | AA | BC, B\gives
  b|BB, C \gives c | CC \}$ and
  the \andor graph of this grammar for a string of length
  5. The path $\tuple{S_{1,5}, A_{2,4}, B_{2,2}}$ uses the productions
  $S_{1,5} \gives  A_{1,1} A_{2,4}$ and $A_{2,4} \gives B_{2,2}
  C_{4,2}$, while the path $\tuple{S_{1,5}, A_{3,3}, B_{3,1}}$ uses the
  productions $S_{1,5} \gives A_{1,2} A_{3,3}$ and $A_{3,3} \gives
  B_{3,1} C_{4,2}$. Both paths map to the same stack configuration
  $\tuple{C_{4,2}}$. 
  \qed
\end{example}

By 
construction, 
the resulting
NFA 
has one state for each stack configuration of the PDA in
parsing a string. 
Since each path corresponds to 
a stack configuration, 
the number of states of the NFA 
before applying $\varepsilon$-closure
is
bounded 
by the number of paths from the root to any \ornode in the \andor
graph. 
This is cheap to compute using the following recursive
algorithm~\cite{darwiche01}:

\begin{eqnarray}
  \label{eq:path-counting}
  PD(v) = \left\{
    \begin{array}{lr}
      1 & \textrm{If $v$ has no incoming edges} \\
      \sum_p PD(p) & \textrm{ where $p$ is a parent of $v$} \\
    \end{array}
    \right.
\end{eqnarray}

Therefore, the number of states of the NFA $N_a$ is at most $\sum_{v}
PD(v)$, where $v$ is an \ornode of $G_a$ (Figure~\ref{f:f3}).

\begin{figure} \centering
    \caption{\label{f:f3}
   Computing the size of $N_a$. (a) \andor graph $G_a$. (b) Stack graph $G_{A_{1,1}}$
   }
    \includegraphics[width=0.5\textwidth,height=4cm]{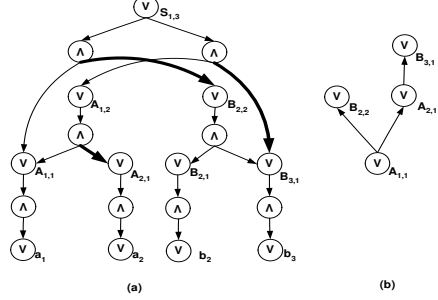}
\end{figure}

We can compute the exact number of paths in $N_a$ before 
$\varepsilon$-closure without constructing the NFA by 
counting paths in the \emph{stack graph} $G_v$ for each \ornode
$v$. 
The stack graph captures the observation that each element of a
stack configuration generated from a path $p$ is associated with
exactly one edge $e$ that is incident on $p$ and leads to the right
child of an \andnode. $G_v$ contains one path for each sequence of
such edges, so that if two paths $p$ and $p'$ in $G_a$ are mapped to
the same stack configuration, they are also mapped to the same path in
$G_v$. 
\nina{%
Formally, the stack graph of an \ornode $v \in V(G_a)$ is a DAG $G_v$, 
such that
for every stack configuration $\Gamma$ of $P_a$ with $k$ elements, 
there is exactly one path $p$ in $G_v$ of length $k$
and $v'$ is the $i^{th}$ vertex of $p$ if and only if $v'$ 
is the $i^{th}$ element from the top of $\Gamma$.
}

\begin{example}
  Consider the grammar of the running example and the \ornode
  $A_{1,1}$ in the \andor graph. The stack graph $G_{A_{1,1}}$ for
  this \ornode is shown in figure~\ref{f:f3}(b). Along the
  path $\tuple{S_{1,3} A_{1,1}}$, only the edge that leads to
  $B_{2,2}$ generates a stack element. This edge is mapped to the edge
  $(A_{1,1}, B_{2,2})$ in $G_{A_{1,1}}$. Similarly, the edges that
  lead to $A_{2,1}$ and $B_{3,1}$ are mapped to the edges $(A_{1,1},
  A_{2,1})$ and $(A_{2,1}, B_{3,1})$ respectively.
  \qed
\end{example}

Since
$G_v$ is a DAG, we can efficiently count the number of paths
in it.
We construct $G_v$ using algorithm~\ref{a:stack-graph}.
The graph $G_v$ computed in algorithm~\ref{a:stack-graph} for an
\ornode $v$ has as many paths as there are unique stack
configurations in
$P_a$ with $v$ at the top. 
\begin{algorithm}
\scriptsize{ 
\caption{Computing the stack DAG $G_v$ of an \ornode $v$}\label{a:stack-graph}
\begin{algorithmic}[1]
\Procedure{StackGraph}{$(in: G_a, v, out: G_v)$}
   \State $V(G_v) = \{ v \}$
   \State $label(v) = \{ v \}$
   \State $Q = \left\{(v, v_p) | v_p \in \mathit{parents}(v)\right\}$ 
   \Comment{queue of edges}
   \While{$Q$ not empty}
       \State $(v_c, v_p) = \mathit{pop}(Q)$
       \If{$v_p$ is an \andnode $v_c$ is left child of $v_p$} \label{a:i-g:left-child}
           \State $v_r = \mathit{children_r(v_p)}$
           \State $V(G_v) = V(G_v) \cup \{ v_r \} $
           \State $E(G_v) = E(G_v) \cup \left\{ (v_l, v_r) | v_l \in label(v_c) 
                                        \right\}$
           \State $label(v_p) = label(v_p) \cup \{ v_r \}$
       \Else
           \State $label(v_p) = label(v_p) \cup label(v_c)$
       \EndIf
       \State
       $Q = Q \cup \left\{(v_p, v_{p}') | v_p' \in \mathit{parents}(v_p)\right\}$
       \Comment{}
   \EndWhile
\EndProcedure
\end{algorithmic}
}
\end{algorithm}

\begin{theorem}
  There exists a bijection between paths in $G_v$ and states in the
  NFA $N_a$ which correspond to stacks with $v$ at the top.
\end{theorem}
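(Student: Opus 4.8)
The plan is to establish the bijection in two directions, using Theorem~\ref{thm:path-to-stack} as the bridge between paths in $G_a$ and stack configurations of $P_a$, and then relating stack configurations with $v$ at the top to paths in $G_v$. Concretely, I would first recall from the construction in Algorithm~\ref{a:pda-to-nfa} that the states of $N_a$ with $v$ at the top are precisely the distinct stack configurations $\left\langle v, \alpha \right\rangle$ reachable in $P_a$; Theorem~\ref{thm:path-to-stack} tells us these are exactly the configurations $\Gamma(p)$ ranging over all paths $p$ from the root to the \ornode $v$ in $G_a$. So it suffices to show that the map $p \mapsto$ (the unique path in $G_v$ determined by the right-child edges along $p$) induces a bijection between the \emph{set of stack configurations} $\{\Gamma(p)\}$ and the \emph{set of paths in} $G_v$.

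The key step is the correctness of Algorithm~\ref{a:stack-graph}, which I would phrase as the invariant claimed just before the theorem: for every stack configuration $\Gamma$ of $P_a$ with $k$ elements and $v$ on top, there is exactly one path $q$ in $G_v$ of length $k$ such that the $i$-th vertex of $q$ equals the $i$-th element of $\Gamma$ from the top, and conversely every path in $G_v$ arises this way. For the forward direction I would take a path $p$ in $G_a$ realizing $\Gamma = \Gamma(p)$ and walk down $p$: each time $p$ descends to the left child of an \andnode, the right child $v_r$ gets pushed, and this is exactly the situation in line~\ref{a:i-g:left-child} of Algorithm~\ref{a:stack-graph} where an edge into $v_r$ is added to $G_v$; descending to a right child pushes nothing and merely propagates the $label$ set. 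Tracking how $label$ sets accumulate up the parent-traversal, one checks by induction on the length of $p$ that the sequence of pushed symbols — read top-to-bottom, i.e. in reverse order of when they were pushed — spells out a genuine path in $G_v$ from $v$, and that this path depends only on $\Gamma$, not on the particular $p$ with $\Gamma(p)=\Gamma$ (this last point is exactly the "if two paths map to the same stack configuration they map to the same path in $G_v$" remark preceding the theorem, and it is forced because $\Gamma$ determines the multiset-with-order of right-child edges). For the reverse direction, given a path $q = v, v_{r_1}, v_{r_2}, \ldots$ in $G_v$, I would reconstruct a stack configuration by reading $q$ as the stack contents top-to-bottom, and argue it is reachable in $P_a$ by exhibiting a leftmost derivation — this reuses the derivation-chasing argument already carried out in the proof of Theorem~\ref{thm:path-to-stack}.

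The main obstacle I anticipate is the bookkeeping around the $label$ sets in Algorithm~\ref{a:stack-graph}: the algorithm does not walk a single path but processes edges to \emph{all} parents, merging $label$ sets, so I need a clean invariant of the form "$label(w)$ is exactly the set of vertices that can appear immediately below $w$ in some stack configuration realizable by a path through $w$", and then show that the edges added to $G_v$ are exactly $\{(u, v_r) : u \in label(v_c),\ v_c \text{ is the left child of an \andnode with right child } v_r\}$ — no more, no less. The subtlety is ruling out spurious paths in $G_v$ that do not correspond to any stack: this follows because every edge $(u, v_r)$ added certifies a concrete sub-path in $G_a$ (from $v$ down to $u$ and a sibling branch to $v_r$) whose right-child edge contributes $v_r$ sitting directly above $u$'s contribution in the stack, so concatenating such certified edges along a $G_v$-path assembles a legitimate path in $G_a$ and hence, via Theorem~\ref{thm:path-to-stack}, a legitimate stack. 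Once the invariant and the "no spurious edges/paths" claim are in place, injectivity and surjectivity of $p \mapsto q$ on the level of stack configurations are immediate, and composing with the surjection of Theorem~\ref{thm:path-to-stack} and the state-$=$-stack identification of Algorithm~\ref{a:pda-to-nfa} yields the desired bijection between paths in $G_v$ and states of $N_a$ with $v$ on top.
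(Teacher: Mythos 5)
Your proposal follows essentially the same route as the paper's proof: map each root-to-$v$ path $p$ in $G_a$ to the sequence of right children of the \andnodes entered via their left child, observe (via the construction of $\Gamma(p)$ in Theorem~\ref{thm:path-to-stack}) that this sequence is exactly the stack read from the top so the $G_v$-path is determined by $\Gamma(p)$, and get injectivity because two distinct paths of the same length differ at some position and hence name stacks with different symbols there. Your additional concern about spurious paths in $G_v$ (the $label$-set invariant for Algorithm~\ref{a:stack-graph}) addresses a direction the paper's own proof treats only implicitly, so it is a more careful rendering of the same argument rather than a different one.
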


\begin{proof}

  \nina{%
  Let $p$ be a path from the root to $v$ in $G_a$.
  First, we show that every path $p'$ in $G_v$
  corresponds to a stack configuration, by mapping $p$ to $p'$. 
  Therefore $p'$ corresponds to $\Gamma(p)$.
  We then show that $p'$ is unique for $\Gamma(p)$.
  This establishes a bijection between paths in $G_v$
  and stack configurations.
  }

  \nina{%
  We traverse the inverse of $p$, denoted $inv(p)$ and construct $p'$
  incrementally. Note that every vertex in $inv(p)$ is examined by
  algorithm~\ref{a:stack-graph} in the construction of $G_v$. If
  $inv(p)$ visits the left child of an \andnode, we append the right
  child of that \andnode to $p'$. This vertex is in $G_v$ by
  line~\ref{a:i-g:left-child}. 
  By the construction of $\Gamma(p)$ in the proof of
  theorem~\ref{thm:path-to-stack}, a symbol is placed on the stack if
  and only if it is the right child of an \andnode, hence if and only
  if it appears in $p'$.
  Moreover, if a vertex is the $i^{th}$ vertex in a path, it
  corresponds to the $i^{th}$ element from the top of $\Gamma(p)$.
  We now see that $p'$ is unique for $\Gamma(p)$. 
  Two distinct paths of length $k$ cannot map to the same
  stack configuration, because they must differ in at least one
  position $i$, therefore they correspond to stacks with different
  symbols at position $i$.
}
  Therefore, there exists a bijection between paths in
  $G_v$ and stack configurations with $v$ at the top.
  \qed
\end{proof}

Hence $|Q(Ng)| = \sum_v \#\mathit{paths}(G_v)$, where $v$ is an
\ornode of $G_a$.
Computing the stack graph $G_v$ of
every \ornode $v$ takes $O(|G_a|)$ time, as does counting paths in
$G_v$. Therefore, computing the number of
states in $N_a$ takes $O(|G_a|^2)$ time.
We can also compute the number of states in the $\varepsilon$-closure
of $N_a$ 
by observing that if none of the \ornodes that are reachable by paths
of length 2 from an \ornode $v$ correspond to terminals, then any
state that corresponds to a stack configuration with $v$ at the top
will only have outgoing $\varepsilon-$transitions and will be removed
by the $\varepsilon-$closure. Thus, to compute the number of states in
$N_a$ after $\varepsilon-$closure, we sum the number of paths in $G_v$
for all \ornodes $v$ such that a terminal \ornode can be reached from
$v$ by a path of length 2.



\subsection{Transformation into a DFA}

Finally, we
convert the NFA into a DFA using the 
standard subset construction. 
This is optional as Pesant's propagator
for the \regular constraints works just as 
well with NFAs as DFAs. Indeed, removing
non-determinism may increase the size of the
automaton and slow down propagation. 
However, converting into a DFA opens up the possibility
of further optimizations. In particular, as we 
describe in the next section, 
there are efficient methods to minimize the
size of a DFA. By comparison,
minimization of a NFA is PSPACE-hard in general
\cite{msfocs72}. Even when we consider just the
acyclic NFA constructed by unfolding
a NFA, minimization remains 
NP-hard \cite{ajvwia99}. 

\section{Automaton minimization}
\label{sec:dfa-min}

\myOmit{
Pesant's propagator for the \regular
constraint runs in $O(ndQ)$ time
where $n$ is the number of variables,
$d$ is the maximum domain size, and
$Q$ is the number of states of
the automaton \cite{pesant1}. If therefore we can
reduce the number of states in the automaton,
then we will speed up propagation. 
For instance, when an automaton is built
in Gecode, we can set a flag to minimize its 
size. }

The DFA constructed by this or other methods
may contain redundant states and transitions. 
We can speed up propagation of the \regular
constraint by minimizing the size of this automaton. 
Minimization can be either offline (i.e. before we have
the problem data and have unfolded the automaton) or online 
(i.e. once we have the problem
data and have unfolded the automaton). 
There are several reasons why we might 
prefer an online approach where we unfold before minimizing. 
First, although minimizing after unfolding
may be more expensive than minimizing
before unfolding, both are cheap to perform. Minimizing 
a DFA takes $O(Q \log Q)$ time 
using Hopcroft's algorithm and $O(nQ)$ time
for the unfolded DFA where $Q$ is the
number of states \cite{revuz1}. 
Second, thanks to Myhill-Nerode's theorem,
minimization does not change the
layered nature of the unfolded DFA.
Third, and perhaps most importantly, 
minimizing a DFA after unfolding
can give an exponentially smaller
automaton than minimizing the DFA and then
unfolding. To put it another way,
unfolding may destroy the minimality
of the DFA.

\begin{theorem}
Given any DFA ${\calA}$,
$|\min(\unfold_n({\calA}))| \ll
|\unfold_n(\min({\calA}))|$.
\end{theorem}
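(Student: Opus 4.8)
The plan is to establish two things: first, that $|\min(\unfold_n(\calA))| \leq |\unfold_n(\min(\calA))|$ always holds, and second, that there is a family of DFAs witnessing an exponential gap. For the inequality, note that $\unfold_n(\min(\calA))$ is \emph{some} DFA accepting exactly the length-$n$ words of $\calL(\calA)$, and $\min(\unfold_n(\calA))$ is the minimal DFA for that same language; minimality of the latter gives the bound immediately. One should be slightly careful that unfolding followed by minimization stays layered (so the comparison is apples-to-apples), but this is exactly what Myhill--Nerode gives us, as already noted in the text preceding the theorem.

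The substance is the separating example. First I would exhibit a DFA $\calA$ whose minimal form is already of modest (polynomial, even constant) size, but such that $\unfold_n(\min(\calA))$ carries along a combinatorial ``counter'' that is forced to remain large because the full automaton must track enough state to recognize arbitrarily long words, whereas once we restrict to length exactly $n$ this information becomes redundant. A clean candidate: a language where membership of a length-$n$ word depends only on its length modulo something, or on a simple prefix condition, combined with a structure that for \emph{unbounded} length needs many states but for a \emph{fixed} length collapses. Concretely, I would look at something like $\calL = \{\, w \mid w$ has length divisible by $k$, or $w$ ends in a block of a fixed pattern$\,\}$ type constructions; alternatively, the standard trick is a DFA with a long mandatory ``lollipop'' (a path of length $m$ into a cycle): minimized, it has $\Theta(m)$ states, and $\unfold_n$ of it has $\Theta(nm)$ states, but $\min(\unfold_n(\cdot))$ detects that after $n$ steps only the reachable-at-layer-$i$ residue matters and the layers fuse down to $O(\log n)$ or $O(1)$ width. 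I would want the minimized-after-unfolding automaton to have width that is polylogarithmic (or constant) in $n$ while the unfolded-then-minimized one has width $\Omega(\mathrm{poly}(n))$ — since $g/f$ needs $\log(g/f) = \Omega(n)$, I actually need the gap to be genuinely exponential, so the example must be chosen so that $\unfold_n(\min(\calA))$ has $2^{\Omega(n)}$ states while $\min(\unfold_n(\calA))$ has polynomially many. The natural source of such a blow-up is an NFA-to-DFA style subset explosion hidden inside $\calA$'s structure that is ``real'' for the infinite language but trivialized by the length restriction — for instance $\calA$ recognizing $\{w : $ the $i$-th symbol from the end satisfies some constraint$\}$, whose minimal DFA is exponential in the lookahead but whose length-$n$ unfolding, read left-to-right with known remaining length, needs only polynomial width.

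The main obstacle is pinning down a single concrete DFA family $\calA$ for which one can \emph{simultaneously} prove (i) $|\min(\unfold_n(\calA))|$ is polynomial in $n$ — via an explicit small layered automaton plus a Myhill--Nerode lower-bound argument showing it is already minimal, and (ii) $|\unfold_n(\min(\calA))|$ is $2^{\Omega(n)}$ — which requires first computing or lower-bounding $|\min(\calA)|$ itself (an infinite-language Myhill--Nerode argument) and then observing unfolding multiplies, or more precisely that the unfolded automaton inherits the exponentially many distinguishable states. I expect the cleanest route is to reuse a known exponential-gap language between ``automaton that knows how much input remains'' and ``automaton that does not'' — essentially the classic $\Sigma^* a \Sigma^{k}$-type example with $k$ growing — and then verify the two size claims, with the length-aware (unfolded-first) side being the easy direction and the length-oblivious (minimized-first) side being where the exponential lower bound must be argued carefully. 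Once the example is fixed, both bounds are standard Myhill--Nerode state-counting; the creative step is entirely in the choice of $\calA$.
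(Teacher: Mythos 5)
Your first half is exactly the paper's argument and is fine: $\unfold_n(\min({\calA}))$ and $\min(\unfold_n({\calA}))$ accept the same language of length-$n$ strings, so minimality of the latter gives $|\min(\unfold_n({\calA}))| \leq |\unfold_n(\min({\calA}))|$. The gap is in the second half, which is where essentially all the content of the theorem lies: you never fix a witness ${\calA}$ and never prove either of the two required size bounds. As written, the proposal surveys candidates (the lollipop and length-modulo ideas, which you correctly discard since they give at most polynomial gaps) and then defers to ``a known exponential-gap language, essentially the classic $\Sigma^* a \Sigma^{k}$-type example,'' leaving both the $2^{\Omega(n)}$ lower bound on $|\unfold_n(\min({\calA}))|$ and the polynomial upper bound on $|\min(\unfold_n({\calA}))|$ as items still to be ``verified.'' Since the definition of $\ll$ demands exhibiting a family with $\log\frac{g_{\calA}(n)}{f_{\calA}(n)} = \Omega(n)$, an unverified sketch of the witness is the missing proof, not a detail. (Your candidate would in fact work: for $\Sigma^* a \Sigma^{k}$ with $k=\Theta(n)$, a Myhill--Nerode count gives $2^{k+1}$ states for $\min({\calA})$ and its length-$n$ unfolding retains $2^{\Omega(n)}$ states, while with the length known the condition becomes ``the symbol at fixed position $n-k$ is $a$,'' so minimizing after unfolding leaves $O(n)$ states --- but none of this is carried out in your text.)

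For comparison, the paper's witness is chosen so that both counts are immediate: over the alphabet $\{0,\ldots,n-1\}$, a string of length $k$ is in $L$ iff it contains the symbol $k \bmod n$. Any DFA for $L$ must track the set of symbols seen so far together with the current length mod $n$, so $\min({\calA})$ has $\Omega(n2^n)$ states and so does its length-$n$ unfolding; but for strings of length exactly $n$ the condition degenerates to ``contains the symbol $0$,'' so unfolding first and then minimizing yields a layered DFA with only $2n$ states (two per layer, recording whether $0$ has occurred). This is precisely the ``length-awareness trivializes the tracked information'' phenomenon you identified, but instantiated concretely with a short count on each side --- that instantiation is what your proposal still owes.
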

\myproof
To show
$|\min(\unfold_n({\calA}))| \leq
|\unfold_n(\min({\calA}))|$, 
we observe that both 
$\min(\unfold_n({\calA}))$ and
$\unfold_n(\min({\calA}))$ are automata
that recognize the same language. 
By definition, minimization returns
the smallest DFA accepting this language. 
Hence $\min(\unfold_n({\calA}))$ cannot be larger than
$\unfold_n(\min({\calA}))$.

\nina{
To show 
unfolding then minimizing can 
give an exponentially smaller sized DFA,
consider the following language $L$. A string of length $k$
belongs to $L$ iff it contains the symbol $j$, $j = k \ \mymod\ n$,
where $n$ is a given constant. 
The alphabet of the language $L$ is  $\{0,\ldots,n-1\}$.}
The minimal DFA
for this 
language has $\Omega(n2^n)$ states
as each state needs to record which symbols from
$0$ to $n-1$ have
been seen so far, as well as the current length of the
string mod $n$. Unfolding this minimal
DFA and restricting it to strings
of length $n$ gives an acyclic DFA
with $\Omega(n2^n)$ states.
\nina{Note that all strings are of length $n$
and the equation $j = n \ \mymod\ n$ has the single solution $j = 0$. Therefore, the language $L$
consists of the strings of length $n$ that contain the symbol $0$.}
On the other hand, if we unfold and then minimize,
we get an acyclic DFA with just $2n$ states.
Each layer of the DFA has two
states which record whether $0$ has been seen.
\qed

\myOmit{
\subsection{Minimization and non-determinism}

As we remarked earlier, 
Pesant's propagator for the \regular
constraint works equally well with NFAs as
with DFAs. 
However, NFAs offer some advantages over DFAs
for specifying \regular constraints \cite{qwcp06}. 
In particular, the smallest NFA for a regular 
language can be exponentially smaller 
than the smallest equivalent DFA. 

\begin{myexample}
Consider again the regular language $L$ used in the last proof. 
This contains all strings which mention
the symbol $j$ where $k$ is the length of the
strong and $j = k \ \mymod\ n$. 
The minimal DFA
for this regular language has $\Omega(n2^n)$ states.
However, there is an exponential smaller
NFA which accepts this same language. 
To see this, given any fixed $j$,
consider the regular language $L_j$
which contains all strings whose length mod $n$ equals
$j$ which also contain the symbol $j$. 
$L_j$ is accepted by the cyclic NFA ${\calA}_j$
with $O(n)$ states in which states record the
current length of the string mod $n$ and whether 
$j$ has been seen so far or not. 
To construct a NFA which accepts $L$, we
simply branch non-deterministically 
from the start state on any possible symbol
$0$ to $n-1$
into ${\calA}_j$ for $0 \leq j < n$. 
The resulting NFA has $O(n^2)$ states and 
accepts any string in $L$. 
\end{myexample}

Unfortunately, non-determinism introduces a significant cost
when we are minimizing. 
Minimization of a DFA is polynomial in the number
of states whilst minimization of a NFA is PSPACE-hard
\cite{msfocs72}. Even when we consider just the
acyclic NFA constructed by unfolding
a NFA, minimization remains 
NP-hard \cite{ajvwia99}. 
One option might be to minimize the NFA offline
since the offline cost may not be important
to us. 
Another option is to minimize online but
with heuristic methods which do not
guarantee the smallest possible NFA but
take only polynomial time. 
However, as with DFAs, there is a strong
reason to want to unfold a NFA before minimizing. 
Minimizing after unfolding
can give an exponentially smaller NFA. 

\begin{mytheorem}
Given any NFA ${\calA}$,
$|\min(\unfold_n({\calA}))| \ll
|\unfold_n(\min({\calA}))|$.
\end{mytheorem}
\myproof
By the definition of minimization,
 $|\min(\unfold_n({\calA}))| \leq
|\unfold_n(\min({\calA}))|$.
To show 
unfolding then minimizing can 
give an exponentially smaller sized automaton, 
consider the language $L'$ containing all strings
which mention the symbols $0$ to $j$ where $k$ is the length of
the string and $j = k \ \mymod\ n$. 
The minimal NFA
for this regular language has $\Omega(n2^n)$ states
as each state needs to record which symbols from $0$ to $n-1$ have
been seen so far, as well as the current length of the
string mod $n$. Unfolding this minimal
NFA and restricting it to strings
of length $n$ gives an acyclic NFA
with $\Omega(n2^n)$ states.
On the other hand, if we unfold and then minimize,
we get an acyclic NFA with just $2n$ states;
each layer of the NFA needs two
states to record if $0$ has been seen
so far or not. 
\qed

\subsection{Simplified automata}
}


Further,
if we make our initial problem
domain consistent, domains might be pruned
which give rise to possible simplifications
of the DFA. We show here that
we should also perform such simplification
before minimizing. 
\sloppy
\begin{theorem} \label{thm:simplified-dfa}
Given any DFA ${\calA}$,
$|\min(simplify(\unfold_n({\calA})))| \ll
|simplify(\min(\unfold_n({\calA})))|.$
\end{theorem}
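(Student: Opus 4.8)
The plan is to establish the two ingredients of the relation $\ll$: the inequality that always holds, and a family witnessing an exponential gap.

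I would prove $|\min(\simplify(\unfold_n(\calA)))| \le |\simplify(\min(\unfold_n(\calA)))|$ in the same way as the earlier theorem for unfolding. The one new observation needed is that the automaton returned by $\simplify$ depends, up to language, only on the reduced domains $D'(X_1),\dots,D'(X_n)$ and on the language of its input. Indeed, $\simplify$ only deletes states and transitions, so it can only shrink the accepted language; it deletes exactly the transitions carrying pruned values, so if the input accepts the set $L_n$ of length-$n$ words of $\calA$, then $\simplify$ of it accepts exactly $L'_n = \{w \in L_n \mid w_i \in D'(X_i)\text{ for all }i\}$, independently of the input's internal structure. Since $\unfold_n(\calA)$ and $\min(\unfold_n(\calA))$ accept the same language $L_n$, both $\simplify(\unfold_n(\calA))$ and $\simplify(\min(\unfold_n(\calA)))$ accept $L'_n$; as $\min$ returns the smallest DFA for a language, the left-hand side cannot be larger.

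For the gap I would use a subset-membership language over the alphabet $\{0,\dots,n-1\}$. Let $\calA$ be such that $\unfold_n(\calA)$ recognises $L_n = \{w_1\cdots w_n \mid w_n \in \{w_1,\dots,w_{n-1}\}\}$, the strings whose last symbol reoccurs among the first $n-1$, and suppose the remaining constraints of the problem force $X_n = 0$, so that after initial propagation $D'(X_n) = \{0\}$ while every other domain is still full. The argument then has three steps. (i) $\min(\unfold_n(\calA))$ is exponential: two length-$(n-1)$ prefixes with different symbol sets are separated by appending any symbol of their symmetric difference, so layer $n-1$ has a distinct state for every realisable subset of $\{0,\dots,n-1\}$, i.e. $\Omega(2^n)$ states. (ii) $\simplify$ keeps exponentially many of these: the layer-$(n-1)$ state for a subset $S$ is still reachable (the first $n-1$ domains are untouched) and is still co-reachable iff reading the only surviving last symbol $0$ leads to acceptance, i.e. iff $0 \in S$; there are $2^{n-1}-1$ such $S$, and $\simplify$ does not merge them, so $|\simplify(\min(\unfold_n(\calA)))| = \Omega(2^n)$. (iii) By contrast $\simplify(\unfold_n(\calA))$ accepts $L'_n = \{w \mid w_n = 0,\ 0 \in \{w_1,\dots,w_{n-1}\}\}$, whose minimal DFA only has to remember whether a $0$ has been seen so far and hence has $O(n)$ states, so $|\min(\simplify(\unfold_n(\calA)))| = O(n)$. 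Combining, $\log\bigl(|\simplify(\min(\unfold_n(\calA)))| / |\min(\simplify(\unfold_n(\calA)))|\bigr) = \Omega(n)$.

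The delicate part is the gap construction, and within it the tuning of the domain reduction: it has to be strong enough that in the simplify-then-minimise order the $\Omega(2^n)$ Myhill--Nerode classes of $L_n$ collapse to the $O(n)$ classes of $L'_n$, yet weak enough that in the minimise-then-simplify order almost all of the corresponding states remain both reachable and co-reachable, so that simplification by itself cannot discard them; restricting only the domain of the last variable to a single value is what threads this needle. The technical core is verifying both of these directions precisely --- that all $2^{n-1}-1$ subset-states survive $\simplify$, and that $L'_n$ genuinely has an $O(n)$-state minimal DFA --- whereas the language-only behaviour of $\simplify$ that powers the easy inequality is immediate from its definition as deletion of newly unreachable states and transitions.
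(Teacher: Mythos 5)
Your proposal is correct and follows essentially the same route as the paper: the inequality comes from observing that both $\min(\simplify(\unfold_n({\calA})))$ and $\simplify(\min(\unfold_n({\calA})))$ accept the same length-$n$ language and minimization returns the smallest such DFA, and the exponential gap comes from a family where the minimized unfolded DFA must track the subset of symbols seen while domain reduction makes that subset information (almost) irrelevant. The only difference is the concrete witness: the paper uses strings over $\{1,\ldots,n\}$ with a repeated integer and distinct last two symbols, pruning the value $n$ from every domain and collapsing via a pigeonhole argument, whereas you use the ``last symbol reoccurs earlier'' language with only the last domain reduced to $\{0\}$ --- both are valid instances of the same argument.
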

\myproof
Both 
$\min(simplify(\unfold_n({\calA})))$ and
$simplify(\min(\unfold_n({\calA})))$ are DFAs
that recognize the same language of strings of length $n$. 
By definition, minimization must return
the smallest DFA accepting this language. 
Hence $\min(simplify(\unfold_n({\calA})))$ is
no larger than $simplify(\min(\unfold_n({\calA})))$.

To show that  minimization after simplification 
may give an exponentially smaller sized automaton,
\myOmit{
consider the language $L$ which contains
sequences of integers from $1$ to $n$ in which at least one 
integer is repeated (but not necessarily at
adjacent positions in the string). The minimal
unfolded DFA for strings
of length $n$ from this language has $\Omega(2^n)$ states
as each state needs to record which 
integers have been seen so far. Suppose the integer
$n$ is removed from the domain of each variable. 
We can remove the corresponding transitions 
in the minimal unfolded DFA. We can also remove
those states which record a substring containing $n$. 
However, the simplified DFA still has $\Omega(2^n)$ states
to record which integers 1 to $n-1$ have been
seen so far. On the other hand, suppose we simplify before we
minimize. By a simple pigeonhole argument, 
we have $n-1$ values taken by $n$ variables so one
value must necessarily be repeated. Hence, 
the simplified DFA accepts any string of 
length $n$. The minimal DFA accepting this
language has $n$ states. 
\qed

Note that such simplification can detect dis-entailment. 
If a \regular constraint is dis-entailed, then
there is no path from the start state to any of the final
states. Hence, simplification will delete the start state. 
What about detecting entailment? 
The example used in the last proof depends 
on a \regular constraint that is entailed. 
We can, however, adapt the example so that the
\regular constraint is not entailed but
minimization after simplification still
gives an exponential reduction.

\begin{myexample}
}
consider the language which contains
sequences of integers from $1$ to $n$ in which at least one 
integer is repeated and in which the last two integers 
are different.
\nina{The alphabet of the language $L$ is  $\{1,\ldots,n\}$.} 
The minimal
unfolded DFA for strings of
length $n$ from this language has $\Omega(2^n)$ states
as each state needs to record which 
integers have been seen.
Suppose the integer
$n$ is removed from the domain of each variable. 
The simplified DFA still has $\Omega(2^n)$ states
to record which integers 1 to $n-1$ have been
seen.
On the other hand, suppose we simplify before we
minimize. By a pigeonhole argument, we can ignore the
constraint that an integer is repeated. Hence we
just need to ensure that the string is of 
length $n$ and that the last two integers are
different. The minimal DFA accepting this
language requires just $O(n)$ states. 
\qed

\myOmit{
Similar results can be shown for NFAs.
That is, minimizing a NFA after simplification
is at least as good as 
minimizing the NFA before simplification, and
can sometimes be exponentially better. 

\begin{mytheorem}
Given any NFA ${\calA}$,
$|\min(\simplify(\unfold_n({\calA})))| \ll
|\simplify(\min(\unfold_n({\calA})))|.$
\end{mytheorem}
\myproof
Both 
$\min(\simplify(\unfold_n({\calA})))$ and
$\simplify(\min(\unfold_n({\calA})))$ are NFAs
that recognize the same language of strings of length $n$. 
By definition, minimization returns
the smallest NFA accepting this language. 
Hence $\min(simplify(\unfold_n({\calA})))$ is at least
as small as $simplify(\min(\unfold_n({\calA})))$.

To show that  minimization after simplification 
may give an exponentially smaller sized automaton
than minimization before simplification, 
consider the language $L$ which contains
sequences of integers from $1$ to $2n$ in which no
integer is repeated. The minimal
unfolded NFA for strings of length $n$ from this language
has $\Omega(2^{2n})$ states
as each state needs to record which 
of the $2n$ possible integers have been seen.
Suppose domains are pruned so that the
$i$th variable has just the integers $2i-1$ and $2i$ left
in its domain. The simplified NFA still has $\Omega(2^{2n})$ states. 
On the other hand, suppose we simplify before we
minimize. The automaton will accept every string
constructed from these reduced domains. 
The minimal NFA accepting this
language requires just $n$ states. 
\qed

The example used in the last proof is again 
a \regular constraint that is entailed. 
We can adapt the example so that the
\regular constraint is not entailed but
minimization after simplification still
gives an exponential reduction.

\begin{myexample}
Consider again the language $L$ which contains
sequences of integers from $1$ to $2n$ in which no
integer is repeated. The minimal
unfolded NFA for strings of length $n$ in this language
has $\Omega(2^{2n})$ states. 
Suppose domains are pruned so that the
the first two variables both
have the integers $1$ and $2$ in the domain,
and the $i$th variable for $i>2$ has just the integers $2i-1$ and $2i$ left
in its domain. Unlike the previous example, the \regular
constraint with these domains is not yet entailed. 
The simplified NFA still has $\Omega(2^{2n})$ states. 
On the other hand, suppose we simplify before we
minimize. The automaton will accept every string
of length $n$ in which the first 
two symbols are different. The minimal NFA accepting this
language requires just $O(n)$ states. 
\end{myexample}
}

\section{Empirical results}
\label{sec:experiments}

We empirically evaluated the results of our method on a set of
shift-scheduling benchmarks~\cite{Demassey05,CoteMIP07}~\footnote{ We
  would like to thank Louis-Martin Rousseau and Claude-Guy Quimper for
  providing us with the benchmark data}. Experiments were run with the
Minisat+ solver 
for pseudo-Boolean instances
and Gecode 2.2.0 for constraint problems,
on an Intel Xeon 4 CPU, 2.0 Ghz, 4G RAM.  We use
a timeout of $3600$ sec in all experiments.
The problem is to schedule employees to activities
subject to various rules, e.g. a
full-time employee has one hour for lunch.  This rules are
specified by a context-free grammar augmented with restrictions on
productions~\cite{qwcp07}.  A schedule for an
employee has $n = 96$ slots of 15 minutes represented by $n$ variables. In
each slot, an employee can work on an activity ($a_i$), take a break
($b$), lunch ($l$) or rest ($r$). These rules are specified by the
following grammar:
\begin{displaymath}
\begin{array}{ccc}
	S\rightarrow RPR,   f_P(i,j)\equiv 13 \leq j \leq 24,  \ &  P \rightarrow WbW, & L \rightarrow lL|l,   f_L(i,j)\equiv j = 4   \\
	S\rightarrow RFR,   f_F(i,j)\equiv 30 \leq j \leq 38,  \ & R \rightarrow rR|r, 	& W \rightarrow A_i,  f_W(i,j)\equiv j \geq 4 \\
	A_i \rightarrow a_iA_i|a_i, f_A(i,j)\equiv open(i), \ & F \rightarrow PLP  & \\
\end{array}
\end{displaymath}
where functions $f(i,j)$ are predicates that restrict the start and
length of any string matched by a specific production, and $open(i)$
is a function that returns $1$ if the business is open at $i^{th}$
slot and $0$ otherwise. In addition, the business requires a certain
number of employees working in each 
activity at given times during the day.
We
minimize the number of slots in which employees
work such that the demand is satisfied.  

As shown in~\cite{qwcp07}, this problem can be converted
into a pseudo-Boolean (PB) model.  The \CFG constraint is converted
into a SAT formula in conjunctive normal form using the \andor graph.
To model labour demand for a slot we introduce Boolean variables
$b(i,j,a_k)$, equal to $1$ if $j^{th}$ employee performs activity
$a_k$ at $i^{th}$ time slot. For each time slot $i$ and activity $a_k$
we post a pseudo-Boolean constraint $\sum_{j=1}^m b(i,j,a_k) >
d(i,a_k)$, where $m$ is the number of employees. The objective is
modelled using the function $\sum_{i=1}^n \sum_{j=1}^m \sum_{k=1}^a
b_{i,j,a_k}$. 
Additionally, the problem can be formulated as an optimization problem
in a constraint solver, using a matrix model with one row for each
employee. We post a \CFG constraint on each row, \among constraints on
each column for labour demand and \lex constraints between adjacent
rows to break symmetry. We use the static variable and value ordering
used in~\cite{qwcp07}.

We compare this with reformulating the \CFG constraint as a
\regular constraint.
Using algorithm~\ref{a:stack-graph}, we computed the
size of an equivalent NFA. Surprisingly, this is not too big, so we
converted the \CFG constraint to a DFA
then minimized. In order to reduce the blow-up that may occur
converting a NFA to a DFA, we heuristically 
minimized the NFA \nina{  using the following simple observation:
two states are equivalent if they have identical outgoing transitions. 
We traverse the NFA from 
the last to the first layer and merge equivalent states and then
apply the same procedure to the reversed NFA. We repeat
until we cannot find a pair of equivalent states. 
}
We also simplified the original \CYK
table, taking into account whether the business is open or closed at
each slot. Theorem~\ref{thm:simplified-dfa} suggests such
simplification can significantly reduce the size both of 
the \CYK table and of the resulting automata. \nina {In practice we also observe a significant 
reduction in size. The resulting minimized automaton obtained before simplification is about 
ten times larger compared to the minimised DFA obtained after simplification.} 
Table~\ref{t:t1} gives the sizes of representations at each step. 
We see from this that the minimized DFA is always smaller than the
original CYK table. 
Interestingly, the subset construction generates the minimum DFA from
the NFA, even in the case of two activities, and heuristic
\nina{minimization of the NFA achieves a notable reduction.}

\begin{table}
\begin{center}
 {\scriptsize
   \caption{\label{t:t1} Shift Scheduling Problems. $G_a$ is the
     acyclic grammar, $N_{a}^{\varepsilon}$ is NFA with
     $\varepsilon$-transitions, $N_a$ is NFA without
     $\varepsilon$-transitions, $\min(N_{a})$ is minimized NFA,
     $\calA$ is DFA obtained from $\min(N_{a})$, $\min({\calA})$ is
     minimized ${\calA}$, $a$ is the number of activities, $\#$ is the
     benchmark number.  }
\begin{tabular}{| c|c|cc|cc|cc|cc|cc|cc|cc|}
\hline
$\#act$ & 
$\#$ & 
\multicolumn {2}{|c|}{$G_a$} & 
\multicolumn {2}{|c|}{$NFA_{a}^{\varepsilon}$} & 
\multicolumn {2}{|c|}{$NFA_{a}$} & 
\multicolumn {2}{|c|}{$\min(NFA_{a})$} & 
\multicolumn {2}{|c|}{$DFA$} & 
\multicolumn {2}{|c|}{$\min(DFA)$} \\ 
\hline 
  & 
  & 
  terms & prods & 
  states & trans & 
  states & trans & 
  states & trans & 
  states & trans & 
  states & trans \\ 
\hline 
1 & 2/3/8 &
4678 &/ 9302&
69050 &/ 80975&
29003 &/ 42274&
\textbf{3556} &/ \textbf{4505}&
3683 &/ 4617&
3681 &/  4615 \\ 
1 & 4/7/10 &
3140 &/ 5541&
26737 &/ 30855&
11526 &/ 16078&
\textbf{1773} &/ \textbf{2296}&
1883 &/ 2399&
1881 &/  2397 \\ 
1 & 5/6 &
2598 &/ 4209&
13742 &/ 15753&
5975 &/ 8104&
\textbf{1129} &/ \textbf{1470}&
1215 &/ 1553&
1213 &/  1551 \\ 
\hline 
2 & 1/2/4 &
3777 &/ 6550&
42993 &/ 52137&
19654 &/ 29722&
\textbf{3157} &/ \textbf{4532}&
3306 &/ 4683&
3303 &/  4679 \\ 
2 & 3/5/6 &
\textbf{5407} &/ {10547}&
111302 &/ 137441&
50129 &/ 79112&
{5975} &/ \textbf{8499}&
6321 &/ 8846&
6318 &/  8842 \\ 
2 & 8/10 &
\textbf{6087} &/ {12425}&
145698 &/ 180513&
65445 &/ 104064&
{7659} &/ \textbf{10865}&
8127 &/ 11334&
8124 &/  11330 \\ 
2 & 9 &
4473 &/ 8405&
76234 &/ 93697&
34477 &/ 53824&
\textbf{4451} &/ \textbf{6373}&
4691 &/ 6614&
4688 &/  6610 \\ 
\hline 
\end{tabular} 

 } 
\end{center} 
\end{table} 

For each instance, we used the resulting DFA in place of the
\CFG constraint in both the CP model and the PB model using
the encoding of the \regular constraint (DFA or NFA) into CNF 
\cite{Bacchus07GAC}. 
We compare the model that uses the PB encoding of the \CFG constraint
($\CFGSM_1$) with two models that use the PB encoding of the \regular
constraint ($\regular_1$, $\regular_2$), a CP model that uses the \CFG constraint 
($\CFGSM_1^{CP}$) and a CP model that uses a
\regular constraint ($\regular_1^{CP}$).
$\regular_1$ and  $\regular_1^{CP}$ use the DFA, whilst $\regular_2$ 
uses the NFA constructed 
after simplification by when the business is closed.

\nina{The performance of a SAT solver can be sensitive to the ordering of the clauses in the formula. To
test robustness of the models, we randomly shuffled each of PB instances to generate 10 equivalent problems 
and averaged the results over 11 instances. Also, 
the \CFG and \regular constraints were encoded into a PB formula in two different ways.
The first encoding ensures that unit propagation enforces domain
consistency on the constraint. The 
second encoding ensures that UP detects disentailment
of the constraint, but does not always enforce domain consistency.
For the \CFG constraint we omit the same set of clauses as in \cite{qwcp07} to obtain 
the weaker PB encoding. 
For the \regular constraint we omit the set of clauses that performs
the backward propagation of the \regular constraint. 
Note that Table~\ref{t:t2} shows the median time and the number of backtracks 
to prove optimality over 11 instances. 
For each model we show the best median time 
and the corresponding number of backtracks for
the PB encoding that achieves domain consistency and for the weaker encoding.}

\begin{table}
\begin{center}
 {\scriptsize
   \caption{\label{t:t2} Shift Scheduling Problems. $\CFGSM_1$ is the PB
     model with $\CFG$, $\regular_1$ is the PB model with
     $\min(\simplify(\mathit{DFA}))$, $\regular_2$ is the PB model with
     $\min(\simplify(\mathit{NFA}))$, $\CFGSM_1^{CP}$ is the CSP model with
     $\CFG$, $\regular_1^{CP}$ is the CSP model with
     $\min(\simplify(\mathit{DFA}))$. We show \textbf{t}ime 
     and number of \textbf{b}acktracks to prove optimality (the median time and the median number of backtracks for the PB encoding
     over \textbf{s}olved  shuffled instances),
     number of \textbf{a}ctivities, the
     number of \textbf{w}orkers and the benchmark number \#. }
\begin{tabular}{|c|c|c||c|c|rl|c|c|rl|c|c|rl||c|rl|c|rl|}
\hline
&&& \multicolumn{12}{|c||}{PB/Minisat+}
& \multicolumn{6}{|c|}{CSP/Gecode}
\\
\hline
$a$ & 
$\#$ & 
$w$ & 
\multicolumn {4}{|c|}{$\CFGSM_1$} & 
\multicolumn {4}{|c|}{$\regular_1$} & 
\multicolumn {4}{|c||}{$\regular_2$} & 
\multicolumn {3}{|c|}{$\CFGSM_1^{CP}$} & 
\multicolumn {3}{|c|}{$\regular_1^{CP}$} \\ 
\hline 
  & 
  & 
  & 
 cost & s& $t$ &/ $b$ & 
  cost & s& $t$ &/ $b$  &
  cost & s& $t$ &/ $b$  &
  cost &  $t$ &/ $b$  &
  cost &  $t$ &/ $b$  \\
\hline 
\hline 
1 & 2& 4 &
\textbf{26.00} & 11 &      27 &/ {    8070}  
 & \textbf{26.00} & 11 &       9 &/ {   11053}  
 & \textbf{26.00} & 11 &\textbf{       4} &/ \textbf{    7433}  
 & {26.75} &- &/ -  
 & \textbf{26.00} &- &/ -  
 \\ 
1 & 3& 6 &
\textbf{36.75} & 11 &     530 &/ {  101560}  
 & \textbf{36.75} & 11 &      94 &/ {   71405}  
 & \textbf{36.75} & 11 &\textbf{      39} &/ \textbf{   58914}  
 & {37.00} &- &/ -  
 & {37.00} &- &/ -  
 \\ 
1 & 4& 6 &
\textbf{38.00} & 11 &      31 &/ {   16251}  
 & \textbf{38.00} & 11 &      12 &/ {   10265}  
 & \textbf{38.00} & 11 &\textbf{       6} &/ \textbf{    7842}  
 & \textbf{38.00} &- &/ -  
 & \textbf{38.00} &- &/ -  
 \\ 
1 & 5& 5 &
\textbf{24.00} & 11 &       5 &/ {    3871}  
 & \textbf{24.00} & 11 &       2 &/ {    4052}  
 & \textbf{24.00} & 11 &\textbf{       2} &/ \textbf{    2598}  
 & \textbf{24.00} &- &/ -  
 & \textbf{24.00} &- &/ -  
 \\ 
1 & 6& 6 &
\textbf{33.00} & 11 &       9 &/ {    5044}  
 & \textbf{33.00} & 11 &       4 &/ {    4817}  
 & \textbf{33.00} & 11 &\textbf{       3} &/ \textbf{    4045}  
 & - &- &/ -  
 & \textbf{33.00} &- &/ -  
 \\ 
1 & 7& 8 &
\textbf{49.00} & 11 &      22 &/ {    7536}  
 & \textbf{49.00} & 11 &       9 &/ {    7450}  
 & \textbf{49.00} & 11 &\textbf{       7} &/ {    8000}  
 & \textbf{49.00} &- &/ -  
 & \textbf{49.00} &- &/ -  
 \\ 
1 & 8& 3 &
\textbf{20.50} & 11 &      13 &/ {    4075}  
 & \textbf{20.50} & 11 &       4 &/ {    5532}  
 & \textbf{20.50} & 11 &\textbf{       2} &/ \textbf{    1901}  
 & {21.00} &- &/ -  
 & \textbf{20.50} &      92 &/ {2205751}  
 \\ 
1 & 10& 9 &
\textbf{54.00} & 11 &     242 &/ {  106167}  
 & \textbf{54.00} & 11 &     111 &/ \textbf{   91804}  
 & \textbf{54.00} & 11 &\textbf{     110} &/ {  109123}  
 & - &- &/ -  
 & - &- &/ -  
 \\ 
\hline 
2 & 1& 5 &
\textbf{25.00} & 11 &      92 &/ {   35120}  
 & \textbf{25.00} & 11 &      96 &/ {   55354}  
 & \textbf{25.00} & 11 &\textbf{      32} &/ \textbf{   28520}  
 & \textbf{25.00} &- &/ -  
 & \textbf{25.00} &      90 &/ {1289554}  
 \\ 
2 & 2& 10 &
\textbf{58.00} & 1 &    3161 &/ \textbf{  555249}  
 & \textbf{58.00} & 0 &- &/ -  
 & \textbf{58.00} & 4 &\textbf{    2249} &/ {  701490}  
 & - &- &/ -  
 & \textbf{58.00} &- &/ -  
 \\ 
2 & 3& 6 &
\textbf{37.75} & 0 &- &/ -  
 & \textbf{37.75} & 1 &    3489 &/ {  590649}  
 & \textbf{37.75} & 9 &\textbf{    2342} &/ \textbf{  570863}  
 & {42.00} &- &/ -  
 & {40.00} &- &/ -  
 \\ 
2 & 4& 11 &
\textbf{70.75} & 0 &- &/ -  
 & {71.25} & 0 &- &/ -  
 & {71.25} & 0 &- &/ -  
 & - &- &/ -  
 & - &- &/ -  
 \\ 
2 & 5& 4 &
\textbf{22.75} & 11 &     739 &/ {  113159}  
 & \textbf{22.75} & 11 &     823 &/ {  146068}  
 & \textbf{22.75} & 11 &\textbf{     308} &/ \textbf{   69168}  
 & {23.00} &- &/ -  
 & {23.00} &- &/ -  
 \\ 
2 & 6& 5 &
\textbf{26.75} & 11 &      86 &/ {   25249}  
 & \textbf{26.75} & 11 &     153 &/ {   52952}  
 & \textbf{26.75} & 11 &\textbf{      28} &/ \textbf{   21463}  
 & \textbf{26.75} &- &/ -  
 & \textbf{26.75} &- &/ -  
 \\ 
2 & 8& 5 &
\textbf{31.25} & 11 &    1167 &/ {  135983}  
 & \textbf{31.25} & 11 &     383 &/ {  123612}  
 & \textbf{31.25} & 11 &\textbf{      74} &/ \textbf{   47627}  
 & {32.00} &- &/ -  
 & {31.50} &- &/ -  
 \\ 
2 & 9& 3 &
\textbf{19.00} & 11 &    1873 &/ {  333299}  
 & \textbf{19.00} & 11 &     629 &/ {  166908}  
 & \textbf{19.00} & 11 &\textbf{     160} &/ \textbf{  131069}  
 & {19.25} &- &/ -  
 & \textbf{19.00} &- &/ -  
 \\ 
2 & 10& 8 &
\textbf{55.00} & 0 &- &/ -  
 & \textbf{55.00} & 0 &- &/ -  
 & \textbf{55.00} & 0 &- &/ -  
 & - &- &/ -  
 & - &- &/ -  
 \\ 
\hline 
\end{tabular} 
} 
\end{center} 
 \end{table} 

Table~\ref{t:t2} shows the results of our experiments using these
5 models.
%
\nina{The model $\regular_2$  outperforms $\CFGSM_1$ in all
benchmarks, whilst model $\regular_1$ outperforms $\CFGSM_1$ in most of the benchmarks. 
The model $\regular_2$ also proves optimality in several instances of hard benchmarks. 
It should be noted that performing simplification before minimization is essential. It significantly reduces the size of the encoding and speeds up MiniSat+ by
factor of 5\footnote{Due to lack of space we do not show these results}.
}
Finally, we note that the PB models consistently outperformed 
the CP models, in agreement with the observations
of~\cite{qwcp07}. Between the two CP models, 
$\regular_1^{CP}$ is significantly better than
$\CFGSM_1^{CP}$, finding a better solution in many instances and proving
optimality in two instances. In addition, although we do not show it
in the table, Gecode is approximately three orders of magnitude faster
per branch with the $\regular_1^{CP}$ model. For instance, in
benchmark number 2 with 1 activity and 4 workers, it explores
approximately 80 million branches with the $\regular_1^{CP}$ and 24000
branches with the $\CFGSM_1^{CP}$ model within the 1 hour timeout.

\section{Other related work}

%

Beldiceanu \emph{et al}~\cite{BCP04} and Pesant~\cite{pesant1}
proposed specifying constraints using automata and provided filtering
algorithms for such specifications. 
Quimper and Walsh \cite{qwcp06} and Sellmann \cite{grammar2}
then independently proposed the \CFG constraint. Both
gave a monolithic propagator based on
the CYK parser.  Quimper and Walsh~\cite{qwcp07} proposed
a CNF decomposition of the \CFG constraint, while
Bacchus~\cite{Bacchus07GAC} proposed a CNF decomposition of the
\regular constraint.
Kadioglu and Sellmann~\cite{KS08}
improved the space efficiency of 
the propagator for the \CFG constraint by a factor of $n$. \nina{
Their propagator was evaluated on the same shift scheduling 
benchmarks as here. However, as they only found feasible solutions and
did not prove optimality, their 
results are not directly comparable. }
C\^{o}t\'{e}, Gendron, Quimper and Rousseau
proposed a mixed-integer programming (MIP) encoding
of the \CFG constraint \cite{CoteMIP07},
Experiments on the same shift scheduling problem
used here
show that such encodings are 
competitive.

There is a body of work on other methods
to reduce the size of constraint representations. 
Closest to this work is
Lagerkvist who observed that
a \regular constraint represented as a multi-value decision
diagram (MDD) is no larger than 
that represented by a DFA that is minimized
and then unfolded \cite{lagerkvistthesis}. 
A MDD is similar to an unfolded and then minimized
DFA except a MDD can have long edges which skip over
layers. We extend this observation by proving
an exponential separation
in size between such representations. 
As a second example, Katsirelos and Walsh 
compressed table constraints representing allowed or disallowed tuples
using decision tree methods 
\cite{kwcp07}. They also used a compressed
representation for tuples that can provide exponentially
savings in space. 
As a third example, Carlsson proposed
the \mycase constraint which can be represented
by a DAG where each node represents a range of
values for a variable, and a path from the root to
a leaf represents a set of satisfying assignments
\cite{carlssoncase}.

\section{Conclusions}

We have shown how to transform a \CFG constraint
into a \regular constraint specified. 
In the worst case, the transformation may increase
the space required to represent the constraint. 
However, in practice, we observed that such
transformation reduces the space required to represent 
the constraint and speeds up propagation. 
We argued that transformation also permits us
to compress the representation using standard
techniques for automaton minimization. 
We proved that minimizing such automata after
they have been unfolded and domains initially
reduced can give automata that are
exponentially more compact than those obtained by
minimizing before unfolding and reducing. 
Experimental results demonstrated 
that such transformations can improve the size of rostering
problems that can be solved. 

\myOmit{
There are many directions for future work.
For example, we intend to look at (heuristic)
techniques to minimize the
\andor graph of the \CYK algorithm.
As a second example, we are exploring
other methods for minimizing the NFA.
For instance, perhaps we can use a SAT encoding
to perform this minimization?
As a third example, we are considering
incrementally minimizing the DFA during search
itself. As domains are pruned, we will
be able to reduce further the size of the DFA.}

\bibliographystyle{splncs}

\bibliography{/home/gkatsi/research/gkatsi}


\end{document}